\documentclass[11pt]{article}

\usepackage{geometry}
\geometry{margin=1.1in}

\usepackage{natbib}

\usepackage{pgfplots}

% Recommended, but optional, packages for figures and better typesetting:
\usepackage{microtype}
\usepackage{graphicx}
\usepackage{booktabs}
\usepackage{array}% for professional tables

% hyperref makes hyperlinks in the resulting PDF.
% If your build breaks (sometimes temporarily if a hyperlink spans a page)
% please comment out the following usepackage line and replace
%\usepackage{icml2020} with \usepackage[nohyperref]{icml2020} above.
\usepackage{hyperref}

% Attempt to make hyperref and algorithmic work together better:

% Use the following line for the initial blind version submitted for review:

% If accepted, instead use the following line for the camera-ready submission:
%\usepackage[accepted]{icml2020}

\usepackage{color}
\usepackage{booktabs}       % professional-quality tables
\usepackage{amsfonts}       % blackboard math symbols
\usepackage{nicefrac}       % compact symbols for 1/2, etc.
\usepackage{microtype}      % microtypography
\usepackage{lipsum}
\usepackage{bbm}
\usepackage{dsfont}

\usepackage{pgffor}
\usepackage{bbm}
\usepackage{graphicx}
\usepackage{subcaption}
\usepackage{multirow}
\usepackage[normalem]{ulem}

\usepackage{amsmath}
%\usepackage[noend]{algpseudocode}

%\newcommand{\okedit}[1]{\textcolor{blue}{#1}}

%\newcommand{\mgedit}[1]{\textcolor{magenta}{#1}}

%show comments
%\newcommand{\comm}[1]{\textcolor{cyan}{#1}}

%hide comments
\newcommand{\comm}[1]{\textcolor{cyan}{}}

\usepackage{amsthm}

\newtheorem{definition}{Definition}
\newtheorem{proposition}{Proposition}

\newtheorem{lemma}{Lemma}
\newtheorem{corollary}{Corollary}

\theoremstyle{definition}
\newtheorem{example}{Example}[section]

\title{Lifted Inference in 2-Variable Markov Logic Networks with Function and Cardinality Constraints Using DFT}

% Single author syntax
\author{%
 %\\
  % examples of more authors
  % \And
   Ond\v{r}ej Ku\v{z}elka \\
   Faculty of Electrical Engineering \\
   Czech Technical University in Prague \\
   Prague, Czech Republic \\
   %\texttt{ondrej.kuzelka@fel.cvut.cz} \\
  % \And
  % Coauthor \\
  % Affiliation \\
  % Address \\
  % \texttt{email} \\
  % \And
  % Coauthor \\
  % Affiliation \\
  % Address \\
  % \texttt{email} \\
}
% Multiple author syntax (remove the single-author syntax above and the \iffalse ... \fi here)

\begin{document}

\maketitle

\begin{abstract}
In this paper we show that inference in 2-variable Markov logic networks (MLNs) with cardinality and function constraints is domain-liftable. To obtain this result we use existing domain-lifted algorithms for weighted first-order model counting (Van den Broeck et al, KR 2014) together with discrete Fourier transform of certain distributions associated to MLNs.
  %In this paper we illustrate the usefulness of Markov logic networks with complex weights (complex MLNs) which we introduced recently. We use them to give a simple proof showing that inference in 2-variable Markov logic networks with function axioms and cardinality constraints is domain-liftable (this result also generalizes to weighted first order model counting).
\end{abstract}

\section{Introduction}

Markov logic networks (MLNs, \citeauthor{Richardson2006}, \citeyear{Richardson2006}) are a statistical relational learning \citep{getoor2007introduction} framework for probabilistic modelling of complex relational structures such as social and biological networks, molecules etc. In general, inference in MLNs is intractable. {\em Lifted inference} refers to a set of methods developed in the literature which exploit symmetries for making probabilistic inference more tractable, e.g.\ \citep{DBLP:conf/ijcai/BrazAR05,DBLP:conf/uai/GogateD11a,van2011lifted,van2014skolemization,DBLP:conf/nips/KazemiKBP16}. In particular, there exist restricted classes of MLNs for which inference is polynomial-time. Such MLNs are called {\em domain-liftable} and the most prominent among them are so-called $2$-variable MLNs \citep{van2011lifted,van2014skolemization}.

Recently, \citeauthor{DBLP:conf/lics/KuusistoL18} (\citeyear{DBLP:conf/lics/KuusistoL18}) obtained a result for {\em weighted first-order model counting} which extends the domain-liftability of 2-variable MLNs by allowing to add one {\em function} constraint, allowing to specify that some binary relation should behave as a function, while still guaranteeing polynomial-time inference using a rather involved approach. %\footnote{To be precise, the work \cite{DBLP:conf/lics/KuusistoL18} was for {\em weighted first-order model counting} which can be used for inference in MLNs.}
In this paper, we show a simpler way to add an arbitrary number of {\em function} constraints and {\em cardinality} constraints while maintaining polynomial-time inference. We build on our previous work \citep{kuzelka.complex} in which we noticed usefulness of complex weights in MLNs.

\section{Background}

\subsection{Notation}

We use $i$ to denote the imaginary unit $i^2 = -1$. For a vector $\mathbf{v}$, we use $[\mathbf{v}]_j$ to denote its $j$-th component. We use $\langle v, w \rangle$ to denote the inner product of the vectors $v$ and $w$ (when $v$ and $w$ are real vectors, inner product coincides with scalar product).

\subsection{Discrete Fourier Transform}\label{sec:fourier}

Let $d$ be a positive integer and let $\mathbf{N} = [N_1, \dots, N_d] \in (\mathbf{N} \setminus \{ 0 \})^d$ be a vector of positive integers. Let us define $\mathcal{J} = \{0,1,\dots, N_1 - 1 \} \times \{0,1,\dots, N_2 - 1 \} \times \dots \times \{0,1,\dots, N_d - 1 \}$. Let $f : \mathcal{J} \rightarrow \mathbb{C}$ be a function defined on $\mathcal{J}$. Then the {\em discrete Fourier transform} (DFT) of $f$ is the function $g : \mathcal{J} \rightarrow \mathbb{C}$ defined as
\begin{equation}\label{eq:dft}
    g(\mathbf{k}) = \sum_{\mathbf{n} \in \mathcal{J}} f(\mathbf{n})  e^{ - i 2 \pi \langle \mathbf{k}, \mathbf{n} / \mathbf{N} \rangle }
\end{equation}
where $\mathbf{k}/\mathbf{N} \stackrel{def}{=} \left[ [\mathbf{k}]_1/N_1, [\mathbf{k}]_2/N_2, \dots, [\mathbf{k}]_d/N_d \right]$ (i.e.\ ``/'' denotes component-wise division). %We use the notation $g = \mathcal{F}\left\{f\right\}$. %The inverse transform is then given as
%\begin{equation}\label{eq:idft}
%    f(\mathbf{n}) = \frac{1}{\prod_{l=1}^d N_l} \sum_{\mathbf{k} \in \mathcal{J}} g(\mathbf{k})  e^{ i 2 \pi \langle \mathbf{n}, \mathbf{k} / \mathbf{N} \rangle}.
%\end{equation}

\subsection{First Order Logic}

We assume a function-free first-order logic (FOL) language defined by a set of constants, a set of variables and a set of predicates (relations). When there is no risk of confusion, we assume such a language implicitly and do not specify its components $\mathcal{V}$, $\mathcal{R}$ (although we will usually specify the domain). Variables start with lowercase letters and constants start with uppercase letters. An atom is $r(a_1,...,a_k)$ with $a_1,...,a_k\in \Delta \cup \mathcal{V}$ and $r\in \mathcal{R}$. A literal is an atom or its negation. For an FOL formula $\alpha$, we define $\textit{vars}(\alpha)$ to be the set of variables contained in it which are not bound to any quantifier.
An FOL formula in which none of the literals contains any variables is called {\em ground}. A possible world $\omega$ is represented as a set of ground atoms that are true in $\omega$. The satisfaction relation $\models$ is defined in the usual way: $\omega \models \alpha$ means that the formula $\alpha$ is true in $\omega$.

\subsection{Markov Logic Networks}

A Markov logic network \citep{Richardson2006} (MLN) is a set of weighted first-order logic formulas $(\alpha,w)$, where $w\in \mathbb{R}$ and $\alpha$ is a function-free FOL formula. The semantics are defined w.r.t.\ the groundings of the FOL formulas, relative to some finite set of constants $\Delta$, called the domain. An MLN $\Phi$ induces the probability distribution on possible worlds $\omega \in \Omega$ over a given domain:
\begin{equation}\label{eq:mln}
    P_{\Phi}(\omega) = \frac{1}{Z} \exp \left(\sum_{(\alpha,w) \in \Phi} w \cdot N(\alpha,\omega)\right),
\end{equation}
where $N(\alpha, \omega)$ is the number of groundings of $\alpha$ satisfied in $\omega$ (when $\alpha$ does not contain any variables, we define $N(\alpha,\omega) = \mathds{1}(\omega \models \alpha)$), and $Z$, called {\em partition function}, is a normalization constant to ensure that $p_{\Phi}$ is a probability distribution. We also allow infinite weights. A weighted formula of the form $(\alpha,+\infty)$ is understood as a hard constraint imposing that all worlds $\omega$ in which $N(\alpha,\omega)$ is not maximal have zero probability (this can also be deduced by taking the limit $w \rightarrow +\infty$). If all formulas in an MLN have at most $k$ variables, we call such an MLN {\em $k$-variable}.

\subsection{Weighted First-Order Model Counting}\label{sec:wfomc}

Computation of the partition function $Z$ can be converted to {\em first-order weighted model counting (WFOMC)}.

\begin{definition}[WFOMC, \citeauthor{van2011lifted}, \citeyear{van2011lifted}]
Let $\Omega$ be a set of possible worlds (often $\Omega$ is the set of all possible worlds over some given domain $\Delta$), $w(P)$ and $\overline{w}(P)$ be functions from predicates to complex\footnote{Normally, in the literature, the weights of predicates are real numbers. However, we will need complex-valued weights in this paper, therefore we define the WFOMC problem accordingly using complex-valued weights.} numbers (we call $w$ and $\overline{w}$ {\em weight functions}) and let $\Gamma$ be an FOL sentence. Then $\operatorname{WFOMC}(\Gamma,w,\overline{w},\Omega) =$
$$
     = \sum_{\omega \in \Omega : \omega \models \Gamma} \prod_{a \in \mathcal{P}(\omega)} w(\textit{Pred}(a)) \prod_{a \in \mathcal{N}(\omega)} \overline{w}(\textit{Pred}(a))
$$
where $\mathcal{P}(\omega)$ and $\mathcal{N}(\omega)$ denote the positive literals that are true and false in $\omega$, respectively, and $\textit{Pred}(a)$ denotes the predicate of $a$ (e.g. $\textit{Pred}(\textit{friends}(\textit{Alice},\textit{Bob})) = \textit{friends}$).
\end{definition}

To compute the partition function $Z$ using weighted model counting, we proceed as \citep{van2011lifted}. Let an MLN $\Phi = \{(\alpha_1,w_1),\dots,(\alpha_m,w_m) \}$ over a set of possible worlds $\Omega$ be given.
For every $(\alpha_j,w_j) \in \Phi$, where the free variables in $\alpha_j$ are exactly $x_1$, $\dots$, $x_k$ and where $w \neq +\infty$, we create a new formula
$
    \forall x_1,\dots,x_k : \xi_j(x_1,\dots,x_k) \Leftrightarrow \alpha_j(x_1,\dots,x_k)
$
where $\xi_j$ is a new fresh predicate. When $w = +\infty$, we instead create a new formula $\forall x_1,\dots,x_k : \alpha_j(x_1,\dots,x_k)$. We denote the resulting set of new formulas $\Gamma$. Then we set
$w(\xi_j) = \exp{\left(w_j \right)}$
and $\overline{w}(\xi_j) = 1$ and for all other predicates we set both $w$ and $\overline{w}$ equal to 1. It is easy to check that then $\mathbf{WFOMC}(\Gamma,w,\overline{w},\Omega) = Z$, which is what we needed to compute. To compute the marginal probability of a given FOL sentence $\gamma$, we have $\textit{P}_{\Phi}[X \models q] = \frac{\mathbf{WFOMC}(\Gamma \cup \{ q \}, w, \overline{w},\Omega)}{\mathbf{WFOMC}(\Gamma, w, \overline{w},\Omega)}$ where $X$ is sampled from the MLN.

\subsection{Domain-Lifted Inference}

Importantly, there are classes of FOL sentences for which weighted model counting is polynomial-time. In particular, let $\Omega$ be the set of all possible worlds over a given domain $\Delta$ and a given set of relations $\mathcal{R}$. As shown in \citep{van2014skolemization}, when the theory $\Gamma$ consists only of FOL sentences, each of which contains at most two logic variables, the weighted model count can be computed in time polynomial in the number of elements in the domain $\Delta$. It follows from the translation described in the previous section that this also means that computing the partition function of $2$-variable MLNs can be done in time polynomial in the size of the domain. This is not the case in general when the number of variables in the formulas is greater than two unless P = \#P$_1$~\citep{beame2015symmetric}.\footnote{\#P$_1$ is the set of \#P problems over a unary alphabet.} Within statistical relational learning, the term used for problems that have such polynomial-time algorithms is {\em domain liftability}.

\begin{definition}[Domain liftability]
An algorithm for computing WFOMC with real weights is said to be domain-liftable if it runs in time polynomial in the size of the domain.
\end{definition}

% In this work we will also need {\it domain liftability over $\mathbb{C}$} which differs from the classical definition by allowing complex-valued weight functions $w(.)$ and $\overline{w}(.)$.

% \begin{definition}[Domain liftability over $\mathbb{C}$]
% An algorithm for computing WFOMC with complex-valued weight functions $w(.)$ and $\overline{w}(.)$ is said to be domain-liftable over $\mathbb{C}$ if it runs in time polynomial in the size of the domain.
% \end{definition}

One can show, by inspecting the respective domain-lifted algorithms from the literature, e.g.\ \citep{van2011lifted,van2014skolemization,beame2015symmetric} that these algorithms can be modified to yield domain-lifted algorithms for WFOMC with complex weights (we refer to \citeauthor{kuzelka.complex}, \citeyear{kuzelka.complex} for details).

\section{Count Distribution and Its DFT}

In this section we will deal with {\em count distributions} induced by MLNs, which are distributions of the numbers of true groundings of given formulas. Let $\Phi = \{(\alpha_1,w_1), \dots, (\alpha_m, w_m) \}$ be an MLN, $\Psi = \{\beta_1,\dots,\beta_{m'} \}$ be a set of FOL formulas and a domain $\Delta$. We first define a notation for the vectors of the ``count-statistics'' on a given possible world $\omega$:
$$\mathbf{N}(\Psi,\omega) \stackrel{def}{=} [N(\beta_1,\omega),\dots,N(\beta_m,\omega)].$$

\noindent Now we can define count distributions formally.

%We now take a look at the distribution of the random vector-valued variable $\mathbf{N}(\Psi,X)$ where $X$ is sampled from the distribution given by the MLN $\Phi$. We call such distributions {\it count distributions}.

\begin{definition}[Count Distribution]
Let $\Phi = \{(\alpha_1,w_1), \dots, (\alpha_m,w_m) \}$ be an MLN defining a distribution over a set of possible worlds $\Omega$ and $\Psi = \{\beta_1,\dots,\beta_{m'} \}$ be a set of FOL formulas. The count distribution of $\Phi$ is the distribution of the $m'$-dimensional vectors of non-negative integers $\mathbf{n}$ given by
$$q_{\Psi,\Phi}(\mathbf{n}) =  \sum_{\omega \in \Omega : \mathbf{N}(\Psi,\omega) = \mathbf{n}} p_{\Phi}(\omega)$$
where $p_{\Phi}$ is the distribution given by the MLN $\Phi$.
\end{definition}

%\noindent Here we note that $\Psi$ and $\Phi$ may contain different formulas.

\subsection{Computing Count Distributions}

At first it is not obvious how to compute a count distribution of a given MLN. Here we extend an approach based on discrete Fourier transform which we introduced in \citep{kuzelka.complex}. Previously in \citep{kuzelka.complex}, we only assumed the case where $\Psi$ and $\Phi$ contained the same set of formulas. We lift this restriction here.

Let $\Omega$ be the set of all possible worlds on a given domain $\Delta$ and a given set of relations $\mathcal{R}$.
We want to compute the DFT of $q_{\Psi,\Phi}(\mathbf{n})$ which is a real-valued function of $m$-dimensional integer vectors. We can restrict the domain\footnote{Here, {\em domain} refers to the domain of a mathematical function, not to a {\em domain} as a set of domain elements.} of $q_{\Psi,\Phi}(\mathbf{n})$ to the set $\mathcal{D} = \left\{0,1,\dots, |\Delta|^{|\textit{vars}(\beta_1)|} \right\} \times \left\{0,1,\dots, |\Delta|^{|\textit{vars}(\beta_2)|}\right\} \times \left\{ 0, 1, \dots, |\Delta|^{|\textit{vars}(\beta_m)|} \right\}$. %where $M_1 = |\Delta|^{|\textit{vars}(\beta_1)|}$, $M_2 = |\Delta|^{|\textit{vars}(\beta_2)|}$, \dots, $M_m = |\Delta|^{|\textit{vars}(\beta_m)|}$.

From the definition of DFT we then have
\begin{equation}\label{dftq1}
    g_{\Psi,\Phi}(\mathbf{k}) = \mathcal{F} \left\{ q_{\Psi,\Phi} \right\} = \sum_{\mathbf{n} \in \mathcal{D}} q_{\Psi,\Phi}(\mathbf{n})  e^{ - i 2 \pi \langle \mathbf{k}, \mathbf{n} / \mathbf{M} \rangle }
\end{equation}
where $\mathbf{M} = \left[|\Delta|^{|\textit{vars}(\beta_1)|}+1,\dots,|\Delta|^{|\textit{vars}(\beta_m)|}+1\right]$ and the division in $\mathbf{n} / \mathbf{M}$ is again component-wise.

For notational convenience, we define $\mathbf{w} = [w_1,w_2,\dots,w_m]$ to be the vector of weights of the formulas from the MLN $\Phi$. Plugging in the definition of $q_{\Phi}(\mathbf{n})$ into (\ref{dftq1}), we obtain
\begin{multline*}
    g_{\Psi,\Phi}(\mathbf{k})= \sum_{\mathbf{n} \in \mathcal{D}}\sum_{\omega \in \Omega : \mathbf{N}(\Psi,\omega) = \mathbf{n}} p_{\Phi}(\omega)  e^{ - i 2 \pi \langle \mathbf{k}, \mathbf{n} / \mathbf{M} \rangle } \\
%    = \sum_{\mathbf{n} \in \mathcal{D}}\sum_{\omega \in \Omega : \mathbf{N}(\Psi,\omega) = \mathbf{n}} \frac{1}{Z} e^{\langle \mathbf{w}, \mathbf{N}(\Phi,\omega) \rangle}  e^{ - i 2 \pi \langle \mathbf{k}, \mathbf{n} / \mathbf{M} \rangle } \\
    = \sum_{\mathbf{n} \in \mathcal{D}}\sum_{\omega \in \Omega : \mathbf{N}(\Psi,\omega) = \mathbf{n}} \frac{1}{Z} e^{\langle \mathbf{w}, \mathbf{N}(\Phi,\omega) \rangle}  e^{ - i 2 \pi \langle \mathbf{k}/\mathbf{M}, \mathbf{n} \rangle } \\
    = \sum_{\mathbf{n} \in \mathcal{D}}\sum_{\omega \in \Omega : \mathbf{N}(\Psi,\omega) = \mathbf{n}} \frac{1}{Z} e^{\langle \mathbf{w}, \mathbf{N}(\Phi,\omega) \rangle}  e^{ - i 2 \pi \langle \mathbf{k}/\mathbf{M}, \mathbf{N}(\Psi,\omega) \rangle } \\
    = \frac{1}{Z} \sum_{\omega \in \Omega}  e^{\langle \mathbf{w}, \mathbf{N}(\Phi,\omega) \rangle - i 2 \pi \langle \mathbf{k}/\mathbf{M}, \mathbf{N}(\Psi,\omega) \rangle }.
\end{multline*}
Now the last expression is already something that can be computed using WFOMC over complex numbers. First, $Z$ is the partition function of the MLN $\Phi$, which can be computed using WFOMC as described in Section \ref{sec:wfomc}.
The sum $\sum_{\omega \in \Omega}  e^{\langle \mathbf{w}, \mathbf{N}(\Phi,\omega) \rangle - i 2 \pi \langle \mathbf{k}/\mathbf{M}, \mathbf{N}(\Psi,\omega) \rangle }$ can be computed in a completely analogical way. For every $(\alpha_j,w_j) \in \Phi$, where the free variables in $\alpha_j$ are exactly $x_1$, $\dots$, $x_k$ and where $w \neq +\infty$, we create a new formula
$
    \forall x_1,\dots,x_k : \xi_j(x_1,\dots,x_k) \Leftrightarrow \alpha_j(x_1,\dots,x_k)
$
where $\xi_j$ is a new fresh predicate. When $w = +\infty$, we instead create a new formula $\forall x_1,\dots,x_k : \alpha_j(x_1,\dots,x_k)$. Similarly, for every $\beta_j \in \Psi$, where the free variables in $\beta_j$ are exactly $x_1$, $\dots$, $x_k$, we create a new formula
$
    \forall x_1,\dots,x_k : \xi_{\beta_j}(x_1,\dots,x_k) \Leftrightarrow \beta_j(x_1,\dots,x_k).
$
%We denote the resulting set of new formulas $\Gamma$.
Then we set
$w(\xi_{\alpha_j}) = \exp{\left(w_j \right)}$ and $\overline{w}(\xi_{\alpha_j}) = 1$ for all $(\alpha_j,w_j) \in \Phi$,
$w(\xi_{\beta_j}) = \exp{\left(-i 2 \pi [\mathbf{k}]_j / M_j\right)}$ and $\overline{w}(\xi_{\beta_j}) = 1$,
and for all other predicates we set both $w$ and $\overline{w}$ to 1.

Thus, we can compute the DFT of a count distribution using a polynomial number (in $|\Delta|$) of queries to a WFOMC oracle. Importantly, we do not need to add explicit cardinality constraints to the MLN or modify the formulas in it or in the set $\Psi$ in any way as long as the WFOMC oracle works with complex weights. The next proposition follows from what we showed above.\footnote{Proposition \ref{prop:dft} could be made a bit stronger since there are classes of WFOMC problems and MLNs beyond the 2-variable fragment that are domain liftable. However, we prefer to present the simpler version here as it is easier to understand.}

\begin{proposition}\label{prop:dft}
Let $\Omega$ be the set of all possible worlds on a given domain $\Delta$ and a given set of relations $\mathcal{R}$. Let $\Phi = \{(\alpha_1,w_1),\dots,(\alpha_m,w_m)\}$ be an MLN and $\Psi = \{\beta_1,\dots,\beta_{m'}\}$ be a set of FOL formulas.
If all the formulas $\alpha_1$, $\dots$, $\alpha_m$ and $\beta_1$, $\dots$, $\beta_{m'}$ contain at most 2 variables then the DFT of the count distribution $q_{\Psi,\Phi}(\mathbf{n})$ can be computed in time polynomial in the domain size $|\Delta|$.
\end{proposition}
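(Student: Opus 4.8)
The plan is to assemble the derivation that immediately precedes the statement into a polynomial-time procedure, so the real content is a complexity count over the DFT frequencies together with a check that every WFOMC query we issue remains inside the 2-variable fragment. Since the closed form for $g_{\Psi,\Phi}(\mathbf{k})$ has already been established, I view the proof as largely bookkeeping layered on top of the domain-liftability of 2-variable WFOMC over complex weights.

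First I would fix an arbitrary frequency vector $\mathbf{k} \in \mathcal{D}$ and recall the identity derived above,
$$g_{\Psi,\Phi}(\mathbf{k}) = \frac{1}{Z} \sum_{\omega \in \Omega} e^{\langle \mathbf{w}, \mathbf{N}(\Phi,\omega) \rangle - i 2 \pi \langle \mathbf{k}/\mathbf{M}, \mathbf{N}(\Psi,\omega) \rangle}.$$
The denominator $Z$ is the partition function of $\Phi$ and is obtained by a single WFOMC query, exactly as in Section~\ref{sec:wfomc}. For the numerator I would adopt the encoding described just before the proposition: introduce a fresh predicate $\xi_{\alpha_j}$ for each $(\alpha_j, w_j) \in \Phi$ with weight $w(\xi_{\alpha_j}) = \exp(w_j)$, and a fresh predicate $\xi_{\beta_j}$ for each $\beta_j \in \Psi$ with the complex weight $w(\xi_{\beta_j}) = \exp(-i 2\pi [\mathbf{k}]_j / M_j)$, setting all companion negative weights and all remaining predicate weights to $1$. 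A short check confirms that the product of predicate weights over a world $\omega$ equals $e^{\langle \mathbf{w}, \mathbf{N}(\Phi,\omega)\rangle - i 2\pi \langle \mathbf{k}/\mathbf{M}, \mathbf{N}(\Psi,\omega)\rangle}$, so the numerator is precisely a complex-weighted WFOMC.

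The decisive step is verifying that both WFOMC instances stay in the 2-variable fragment. Each defining sentence has the form $\forall x_1,\dots,x_k : \xi(x_1,\dots,x_k) \Leftrightarrow \alpha(x_1,\dots,x_k)$ (or, for infinite weight, the hard constraint $\forall x_1,\dots,x_k : \alpha$), and such a biconditional introduces no variables beyond those already occurring in $\alpha$ or $\beta$. Since every $\alpha_j$ and every $\beta_j$ has at most two variables by hypothesis, so does every sentence in the resulting theory $\Gamma$. Invoking the complex-weight version of the domain-lifted algorithm (available by the discussion following the definition of domain liftability), each individual value $g_{\Psi,\Phi}(\mathbf{k})$ is therefore computable in time polynomial in $|\Delta|$.

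Finally I would bound the number of evaluation points. The DFT must be computed for every $\mathbf{k} \in \mathcal{D}$, where $|\mathcal{D}| = \prod_{j=1}^{m'} \left( |\Delta|^{|\textit{vars}(\beta_j)|} + 1 \right)$. Because each $|\textit{vars}(\beta_j)| \le 2$, every factor is at most $|\Delta|^2 + 1$, so $|\mathcal{D}| \le (|\Delta|^2 + 1)^{m'}$, which is polynomial in $|\Delta|$ for the fixed number of formulas $m'$. Multiplying this polynomial count of points by the per-point polynomial cost yields an overall polynomial-time bound, establishing the claim. The only subtlety I anticipate is confirming that the introduction of the $\xi$ predicates does not inflate the variable count; once that observation is in place, the proposition follows directly from the domain-liftability of 2-variable WFOMC with complex weights.
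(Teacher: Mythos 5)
Your proposal is correct and takes essentially the same approach as the paper: the paper's proof simply defers to the preceding discussion, which is exactly what you reconstruct --- one complex-weighted WFOMC query per frequency vector $\mathbf{k} \in \mathcal{D}$ (plus one for $Z$), each staying inside the 2-variable fragment via the $\xi$-predicate encoding. Your explicit bound $|\mathcal{D}| \le (|\Delta|^2+1)^{m'}$ and the check that the biconditional definitions do not inflate the variable count merely make the paper's implicit bookkeeping precise.
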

\begin{proof}
The proof follows from the discussion above.
\end{proof}

Now, we know how to compute DFT of count distributions but we have not yet explained how to compute the count distributions themselves. That is actually very easy. We can just take the DFT and invert it. Thus, we obtain the next corollary.

\begin{corollary}
Let $\Omega$, $\Delta$, $\Phi$ and $\Psi$ be as in Proposition \ref{prop:dft}. Then the count distribution $q_{\Psi,\Phi}(\mathbf{n})$ can be computed in time polynomial in the domain size $|\Delta|$.
\end{corollary}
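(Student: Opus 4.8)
The plan is to recover $q_{\Psi,\Phi}$ from its discrete Fourier transform $g_{\Psi,\Phi}$ by a single application of the inverse DFT. Proposition~\ref{prop:dft} already supplies the entire function $g_{\Psi,\Phi}$, i.e.\ every value $g_{\Psi,\Phi}(\mathbf{k})$ for $\mathbf{k} \in \mathcal{D}$, in time polynomial in $|\Delta|$. Hence it remains only to invert the transform and to verify that this inversion is itself polynomial-time and exact.

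First I would invoke the inversion formula for the DFT of Section~\ref{sec:fourier}. Since the forward transform in~\eqref{eq:dft} is an invertible linear map on functions over the finite grid (here that grid is $\mathcal{D}$ with moduli $\mathbf{M}$), its inverse is
\begin{equation*}
q_{\Psi,\Phi}(\mathbf{n}) = \frac{1}{\prod_{j=1}^{m'} M_j} \sum_{\mathbf{k} \in \mathcal{D}} g_{\Psi,\Phi}(\mathbf{k})\, e^{\, i 2 \pi \langle \mathbf{k}, \mathbf{n}/\mathbf{M} \rangle}.
\end{equation*}
Evaluating this for a single $\mathbf{n}$ is a sum of $|\mathcal{D}| = \prod_{j=1}^{m'}\left(|\Delta|^{|\textit{vars}(\beta_j)|}+1\right)$ precomputed terms, and there are exactly $|\mathcal{D}|$ vectors $\mathbf{n}$ to recover, so the whole inversion costs $O(|\mathcal{D}|^2)$ arithmetic operations on complex numbers. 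Because every $\beta_j$ has at most two variables, each factor satisfies $|\Delta|^{|\textit{vars}(\beta_j)|}+1 \le |\Delta|^2 + 1$, whence $|\mathcal{D}| \le (|\Delta|^2+1)^{m'}$, which is polynomial in $|\Delta|$ for a fixed set $\Psi$. Combined with the polynomial-time computation of $g_{\Psi,\Phi}$ from Proposition~\ref{prop:dft}, this gives the claimed bound.

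The only genuine subtlety — the step I would treat most carefully — is the bookkeeping that makes the procedure \emph{exactly} correct and \emph{genuinely} polynomial. On the one hand, one must treat $m'$ as a fixed parameter, so that $|\mathcal{D}|$ grows only polynomially in $|\Delta|$ rather than exponentially in the number of formulas. On the other hand, one must confirm that restricting $q_{\Psi,\Phi}$ to $\mathcal{D}$ loses no information, so that the DFT over the grid with moduli $\mathbf{M}$ is a true bijection and the inversion formula is exact with no aliasing. This holds precisely because each count $N(\beta_j,\omega)$ ranges over $\{0,1,\dots,|\Delta|^{|\textit{vars}(\beta_j)|}\}$, i.e.\ over at most $M_j$ distinct values, so the support of $q_{\Psi,\Phi}$ is contained in $\mathcal{D}$ and each modulus $M_j$ exceeds the largest attainable count in its coordinate. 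Everything else is routine linear algebra.
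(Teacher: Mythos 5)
Your proposal is correct and follows exactly the paper's route: the paper proves this corollary simply by noting that one can take the DFT supplied by Proposition~\ref{prop:dft} and invert it, which is precisely your argument. Your added details (the explicit inversion formula, the $O(|\mathcal{D}|^2)$ cost with $m'$ fixed, and the observation that no aliasing occurs because each count $N(\beta_j,\omega)$ is bounded by $M_j - 1$) are sound elaborations of what the paper leaves implicit.
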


\section{MLNs with Cardinality Constraints}

In this paper, a {\em cardinality constraint} $(\Psi,g)$ is a pair consisting of a set of formulas $\Psi = \{\beta_1,\dots,\beta_d \}$ and a function $g : \mathbb{N}^d \rightarrow \{0,1 \}$.
A distribution $p$ satisfies a given cardinality constraint $(\Psi,g)$ if $p(\omega) = 0$ for all $\omega$ s.t.    $g(\mathbf{N}(\Psi,\omega)) = 0$.
We can use cardinality constraints to define MLN-like distributions:
$$p_{\Phi,(\Psi,g)}(\omega) = \frac{g(\mathbf{N}(\Psi,\omega))}{Z} \exp\left(\sum_{(\alpha,w) \in \Phi} w \cdot N(\alpha,\omega)\right)$$
where $$Z = \sum_{\omega \in \Omega} g(\mathbf{N}(\Psi,\omega)) \cdot \exp\left(\sum_{(\alpha,w) \in \Phi} w \cdot N(\alpha,\omega)\right)$$
is a normalization constant. Let $p_{\Phi}$ be a distribution given by the MLN $\Phi$ and $p_{\Phi,(\Psi,g)}$ be a distribution given by the same MLN with the cardinality constraint $(\Psi,g)$. Then for all $\omega_1$, $\omega_2$ such that $f(\omega_1) = f(\omega_2) = 1$, it holds that
$\frac{p_{\Phi}(\omega_1)}{p_{\Phi}(\omega_2)} = \frac{p_{\Phi,(\Psi,g)}(\omega_1)}{p_{\Phi,(\Psi,g)}(\omega_2)}.$
This means that adding cardinality constraints in this way does not affect ratios of probabilities of those possible worlds which satisfy the constraints. This will be useful in the next section.

\begin{example}
For instance, if we have an MLN $\Phi = \{ (\textit{sm}(x) \wedge \textit{fr}(x,y) \Rightarrow \textit{sm}(y), w) \}$ modelling how smoking behaviour of people affects smoking habits of their friends, we can use cardinality constraints to express that exactly $M$ people are smokers. For this, we can set $\Psi = \{\textit{sm}(x) \}$ and $g(n) = \mathds{1}(n = M)$.
\end{example}

Inference in MLNs with cardinality constraints can be done using inference over count distributions which we already know how to do from the previous section. Let us have an MLN $\Phi$ with a cardinality constraint $(\Psi,g)$. Suppose that we want to compute the probability of a marginal query $P[X \models \gamma]$ for an FOL sentence $\gamma$. We construct the count distribution $q_{\Phi,\Psi \cup \{ \gamma \}}$ as described in the previous section and compute
\begin{equation*}
P[X \models \gamma] = \sum_{\mathbf{j} \in \mathcal{D} : [\mathbf{j}]_{m+1} = 1} g(\mathbf{j}) \cdot q_{\Phi,\Psi \cup \{ \gamma \}}(\mathbf{j})
\end{equation*}
where $\mathcal{D} = \{0,1,\dots, M_1 \} \times \{0,1,\dots, M_2\} \times \{ 0, 1, \dots, M_m \} \times \{ 0, 1 \}$ where $M_1 = |\Delta|^{|\textit{vars}(\beta_1)|}$, $M_2 = |\Delta|^{|\textit{vars}(\beta_2)|}$, \dots, $M_m = |\Delta|^{|\textit{vars}(\beta_m)|}$. After that we are done. Notice that the condition $[\mathbf{j}]_{m+1} = 1$ in the sum makes sure that we are only summing up probabilities of possible worlds in which $\gamma$ is true. It follows from the discussion in this and the previous section that the next proposition holds.

\begin{proposition}\label{prop:cardinality}
Let $\Omega$ be the set of all possible worlds on a given domain $\Delta$ and a given set of relations $\mathcal{R}$. Let $\Phi = \{(\alpha_1,w_1),\dots,(\alpha_m,w_m)\}$ be a $2$-variable MLN over $\Omega$. Let $(\Psi,g)$ be a cardinality constraint where each $\beta \in \Psi$ has at most two variables and let $\gamma$ be an FOL sentence with at most 2 logic variables.
Then the probability of the marginal query $P[X \models \gamma]$, where $X$ is sampled from the distribution given by $\Phi$ with the cardinality constraint $(\Psi,g)$, can be computed in time polynomial in the domain size~$|\Delta|$.
\end{proposition}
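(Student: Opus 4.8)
The plan is to assemble the result from the machinery already developed, treating the marginal query as one more formula whose count statistic I track. The key observation is that the marginal query $P[X \models \gamma]$ under the constrained distribution $p_{\Phi,(\Psi,g)}$ can be rewritten purely in terms of a count distribution. First I would form the augmented set of formulas $\Psi' = \Psi \cup \{\gamma\}$ and consider the count distribution $q_{\Psi',\Phi}(\mathbf{n})$ induced by the \emph{unconstrained} MLN $\Phi$. Writing $\mathbf{n} = (\mathbf{j}, b)$ where $\mathbf{j} \in \mathbb{N}^d$ records the counts of the $\beta_i \in \Psi$ and $b \in \{0,1\}$ records $N(\gamma,\omega) = \mathds{1}(\omega \models \gamma)$, I would express the desired probability as
\begin{equation*}
P[X \models \gamma] = \frac{\sum_{\mathbf{j}} g(\mathbf{j}) \, q_{\Psi',\Phi}(\mathbf{j},1)}{\sum_{\mathbf{j},b} g(\mathbf{j}) \, q_{\Psi',\Phi}(\mathbf{j},b)},
\end{equation*}
which is exactly the formula stated just before the proposition (up to the explicit normalization that makes $p_{\Phi,(\Psi,g)}$ a distribution). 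The correctness of this identity follows from the ratio-invariance property noted above: restricting to worlds with $g(\mathbf{N}(\Psi,\omega)) = 1$ and reweighting does not change conditional probabilities among those worlds, so summing $g(\mathbf{j}) q_{\Psi',\Phi}(\mathbf{j},b)$ reproduces the constrained measure up to its normalizer.

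The second step is the complexity accounting. Since $\Phi$ is a $2$-variable MLN and every $\beta \in \Psi$ as well as $\gamma$ has at most two variables, the augmented family $\Psi'$ satisfies the hypotheses of Proposition~\ref{prop:dft}. Hence the DFT $g_{\Psi',\Phi}$ can be computed with a polynomial (in $|\Delta|$) number of complex-weighted WFOMC queries, each of which runs in time polynomial in $|\Delta|$ by the domain-liftability of the $2$-variable fragment together with its complex-weight extension. By the Corollary following Proposition~\ref{prop:dft}, inverting the DFT recovers the count distribution $q_{\Psi',\Phi}$ itself in time polynomial in $|\Delta|$.

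The final step is to bound the cost of the summations in the formula above. The relevant index set is $\mathcal{D} = \{0,\dots,M_1\} \times \dots \times \{0,\dots,M_m\} \times \{0,1\}$ with $M_i = |\Delta|^{|\textit{vars}(\beta_i)|} \le |\Delta|^2$, so $|\mathcal{D}| = O(|\Delta|^{2d})$, which is polynomial in $|\Delta|$ for fixed $d$. Each evaluation of $g$ and each lookup of $q_{\Psi',\Phi}$ is a constant-time operation, so both sums run in polynomial time, and the result follows by chaining these bounds. The one point deserving care -- and the place I expect a careful reader to scrutinize -- is the interplay between the number of constraint formulas $d$ and the domain size: the construction is polynomial in $|\Delta|$ only when $d$ (and the number of MLN formulas $m$) is treated as a fixed parameter, since the size of $\mathcal{D}$ and the number of DFT sample points both grow exponentially in $d$. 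Making this dependence explicit, and confirming that the ratio-invariance identity correctly handles the normalization so that the stated marginal-query formula is exact rather than merely proportional, are the two substantive checks; everything else is routine composition of the earlier results.
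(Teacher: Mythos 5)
Your proposal is correct and takes essentially the same route as the paper: augment the constraint set to $\Psi \cup \{\gamma\}$, compute its count distribution under the unconstrained MLN via Proposition~\ref{prop:dft} and DFT inversion, and then sum $g(\mathbf{j})\, q_{\Psi \cup \{\gamma\},\Phi}(\mathbf{j},b)$ over the polynomially sized index set. If anything, you are more careful than the paper: its displayed equation for $P[X \models \gamma]$ omits the normalizing denominator $\sum_{\mathbf{j},b} g(\mathbf{j})\, q_{\Psi \cup \{\gamma\},\Phi}(\mathbf{j},b)$ that your ratio includes, and this denominator is genuinely needed because the count distribution is taken with respect to the unconstrained MLN, under which the constraint-satisfying worlds generally do not have total mass one.
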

\begin{proof}
The proof follows from Proposition \ref{prop:dft} and the discussion above.
\end{proof}

\section{MLNs with Function Constraints}

A {\em function constraint} $\textit{Func}(R_i)$, where $R$ is a relation, is a constraint equivalent to the first order-logic sentence $\forall x \exists_{=1} R_i(x,y)$ which asserts that for every $x$ there is exactly one $y$ such that $R(x,y)$ is true. In this section we show how to extend 2-variable MLNs to handle an arbitrary number of function constraints while still guaranteeing inference in time polynomial in the domain size $|\Delta|$.

We start with the following simple lemma that will allow us to reduce inference in 2-variable MLNs with function (and possibly also cardinality) constraints to inference in 2-variable MLNs with only cardinality constraints.

\begin{lemma}\label{lemma:lemma1}
Let $\Omega$ be the set of all possible worlds on a domain $\Delta$. Let $\Phi$ be a first order logic sentence. Let
$\Psi = \textit{Func}(R_{i_1}) \wedge \dots \wedge \textit{Func}(R_{i_h})$
and
$
    \Psi' = (\forall x \exists y: R_{i_1}(x,y)) \wedge (|R_{i_1}| = |\Delta|) \wedge
    \dots \wedge (\forall x \exists y: R_{i_h}(x,y)) \wedge (|R_{i_h}| = |\Delta|).
$
Then for all $\omega \in \Omega$: $(\omega \models \Phi \wedge \Psi) \Leftrightarrow (\omega \models \Phi \wedge \Psi')$.
\end{lemma}
\begin{proof}
It suffices to show validity of the statement for just one constraint on a relation $R$ (the general case follows easily). %Thus, we consider the formulas $\Psi = \textit{Func}(R)$ and $\Psi' =  (\forall x \exists y: R(x,y)) \wedge (|R| = |\Delta|)$.
%We first prove the implication $\omega \in \Omega$: $(\omega \models \Phi \wedge \Psi) \Rightarrow (\omega \models \Phi \wedge \Psi')$.
The constraint $\textit{Func}(R)$ can be rewritten as: (i) $\forall x \exists y : R(x,y)$ and (ii) $\forall x,y,z: R(x,y) \wedge R(x,z) \Rightarrow y = z$.
($\Rightarrow$) It follows from (i) that $|R| \geq |\Delta|$. If $|R| > |\Delta|$ then by the pigeon-hole principle, there must be at least one $C \in \Delta$ such that $R(C,D)$ and $R(C,D')$ for some $D \neq D' \in \Delta$ which contradicts (ii). Hence, $\textit{Func}(R)$ implies $|R| = |\Delta|$ and $\forall x\exists y : R(x,y)$.
($\Leftarrow$) What we need to show is that if $(\forall x \exists y: R(x,y)) \wedge (|R| = |\Delta|)$ holds then (i) and (ii) must hold as well. Clearly, (i) must hold. So let us suppose, for contradiction, that $(\forall x \exists y: R(x,y)) \wedge (|R| = |\Delta|)$ holds but there is some $C \in \Delta$ such that $R(C,D)$ and $R(C,D')$ for some $D \neq D' \in \Delta$. We have $|\{ (x,y) \in \Delta^2 | R(x,y) \wedge x \neq C \}| \geq |\Delta|-1$ (from $\forall x \exists y : R(x,y)$). Therefore it is easy to see that $|R| \geq |\{ (x,y) \in \Delta^2 | R(x,y) \wedge x \neq C \}| + 2 > |\Delta|$, which is a contradiction.
\end{proof}

\noindent Note that the constraints $|R_{i_1}| = |\Delta|$, $\dots$, $|R_{i_h}| = |\Delta|$ can easily be represented as cardinality constraints.

\begin{proposition}\label{prop:function}
Let $\Omega$, $\Delta$, $\mathcal{R}$, $\gamma$ and $\Phi$ be as in Proposition \ref{prop:cardinality}. Let $\Psi$ be a conjunction of functional constraints.
Then the probability of the marginal query $P[X \models \gamma]$, where $X$ is sampled from the distribution given by $\Phi$ with the function constraints $\Psi$, can be computed in time polynomial in the domain size~$|\Delta|$.
\end{proposition}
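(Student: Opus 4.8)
The plan is to reduce the claim to Proposition \ref{prop:cardinality} by converting each function constraint into a hard logical constraint together with a cardinality constraint, using Lemma \ref{lemma:lemma1}.

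First I would apply Lemma \ref{lemma:lemma1}. Writing $\Psi = \textit{Func}(R_{i_1}) \wedge \dots \wedge \textit{Func}(R_{i_h})$, the lemma (applied with a tautology in place of its sentence $\Phi$) tells us that for every $\omega \in \Omega$ we have $\omega \models \Psi$ if and only if $\omega \models \Psi'$, where $\Psi' = \bigwedge_{j=1}^h \left( (\forall x \exists y : R_{i_j}(x,y)) \wedge (|R_{i_j}| = |\Delta|) \right)$. Hence the indicator functions $\mathds{1}(\omega \models \Psi)$ and $\mathds{1}(\omega \models \Psi')$ coincide, so the distribution of $X$ obtained from $\Phi$ with the function constraints $\Psi$ is unchanged if we replace $\Psi$ by $\Psi'$.

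Next I would split $\Psi'$ into its two kinds of conjuncts. Each existential conjunct $\forall x \exists y : R_{i_j}(x,y)$ is a 2-variable FOL sentence, so I would absorb all of them into the MLN as hard formulas, forming the new MLN $\Phi' = \Phi \cup \{(\forall x \exists y : R_{i_j}(x,y),\, +\infty) : 1 \le j \le h\}$. Since the added formulas contain at most two variables, $\Phi'$ is again a 2-variable MLN. The remaining cardinality conjuncts $|R_{i_j}| = |\Delta|$ I would collect into a single cardinality constraint $(\Psi_{\text{card}}, g)$ with $\Psi_{\text{card}} = \{R_{i_1}(x,y), \dots, R_{i_h}(x,y)\}$ and $g(\mathbf{n}) = \prod_{j=1}^h \mathds{1}([\mathbf{n}]_j = |\Delta|)$; note that each formula $R_{i_j}(x,y)$ has exactly two variables, as required, and that $N(R_{i_j}(x,y),\omega) = |R_{i_j}|$ in $\omega$, so $g(\mathbf{N}(\Psi_{\text{card}},\omega))$ enforces precisely the cardinality conjuncts. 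A short check then shows that the distribution given by $\Phi$ with function constraints $\Psi$ is exactly $p_{\Phi', (\Psi_{\text{card}}, g)}$.

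Finally I would invoke Proposition \ref{prop:cardinality} directly: $\Phi'$ is a 2-variable MLN, $(\Psi_{\text{card}}, g)$ is a cardinality constraint whose formulas each have at most two variables, and $\gamma$ is a 2-variable FOL sentence, so $P[X \models \gamma]$ can be computed in time polynomial in $|\Delta|$. There is no deep obstacle here; the main (and modest) point deserving care is verifying that the rewriting genuinely preserves both the distribution and the 2-variable property — the existential constraints must be moved into the MLN as hard formulas without raising the variable count, and the cardinality conjuncts $|R_{i_j}| = |\Delta|$ must be expressed in the exact form demanded by the cardinality-constraint machinery of the previous section. Once this bookkeeping is settled, the result is immediate.
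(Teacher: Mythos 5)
Your proposal is correct and follows essentially the same route as the paper: the paper's own proof simply cites Lemma \ref{lemma:lemma1}, Proposition \ref{prop:cardinality}, and the surrounding discussion (which notes that the constraints $|R_{i_j}| = |\Delta|$ can be represented as cardinality constraints, and the paper's Example treats $\forall x \exists y : f(x,y)$ as a hard MLN formula exactly as you do). You have merely made explicit the bookkeeping that the paper leaves implicit, and it checks out.
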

\begin{proof}
The proof follows from Lemma \ref{lemma:lemma1}, Proposition \ref{prop:cardinality} and the discussion above.
\end{proof}

%\section{An Illustrative Example}

Next we illustrate the methods presented in this paper on an example.

\begin{figure}
\centering
\includegraphics[width=0.49\linewidth]{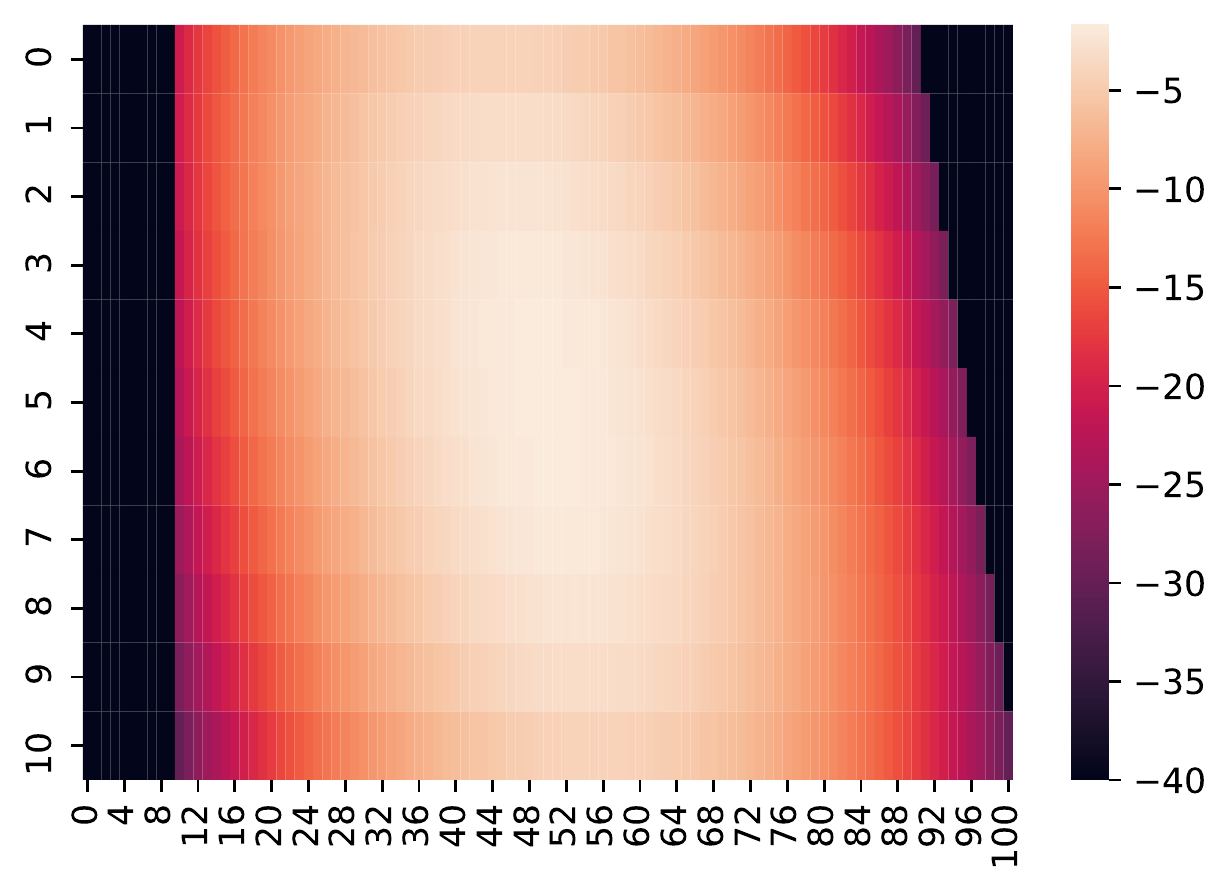}
\includegraphics[width=0.49\linewidth]{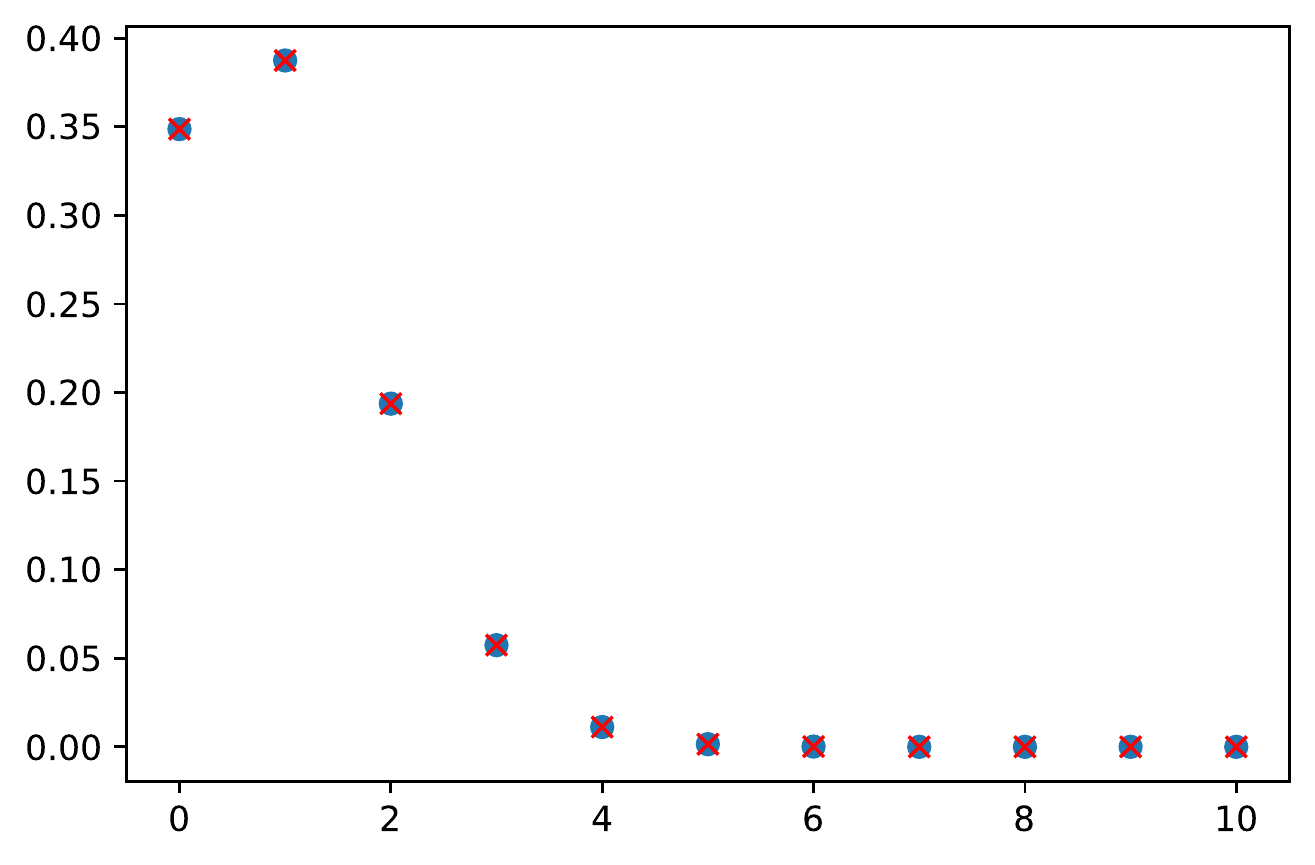}
\caption{{\bf Left:} The count distribution from Example \ref{example}. {\bf Right: } The distribution of the number of fixed points (see Example \ref{example}).}\label{figure}
\end{figure}

\begin{example}\label{example}
How many fixed points does a uniformly sampled function from $\{1,2,\dots,n \}$ to itself have? We can answer this question using MLNs with functional constraints. First, we define $\Phi = \{ (\forall x \exists y : f(x,y), +\infty) \}$. Since we need to enforce the cardinality constraint $|f| = |\Delta|$ (cf discussion in the previous section about encoding functional constraints), we will need the formula $\beta_1 = f(x,y)$. Since we are interested in the number of fixed points, we will also need the formula $\beta_2 = f(x,x)$. Next we define $\Psi = \{\beta_1,\beta_2\}$. Then, using DFT and WFOMC, we compute the count distribution $q_{\Phi,\Psi}$, which is shown in the left panel of Figure \ref{figure} for $n = 10$. Note that the MLN $\Phi$ itself does not model distribution over functions but only over relations $f(x,y)$ which must satisfy $\forall x \exists y : f(x,y)$ but which may or may not be functions. However, we can extract the distribution that we wanted to compute from the count distribution of this MLN. In particular the probability that a uniformly sampled function has $k$ fixed points is equal to $q_{\Phi,\Psi}(|\Delta|,k)/Z'$ where $Z' = \sum_{j = 1}^{|\Delta|} q_{\Phi,\Psi}(|\Delta|,j)$. We show the computed distribution in the right panel of Figure \ref{figure} (blue circles). As a sanity check, we also computed the distribution analytically using the formula $\binom{n}{k} (n-1) ^{n-k}/n^n$ and displayed it in the same plot (red crosses). As expected, the values computed using the two approaches are the same.
\end{example}

\section{Conclusions}

In this paper we have shown how WFOMC with complex weights can be used to obtain new domain-liftability results in a rather straightforward and, arguably, elegant way. We hope that the general approach presented here can lead to further new domain liftability results. There are many things that can still be done from here. First it is possible to get rid of the complex numbers, at the cost of slightly more complicated analysis, either using the {\em number-theoretic transform} in place of complex DFT or using polynomial interpolation. Second, as pointed out in \citep{DBLP:conf/lics/KuusistoL18}, domain liftability for FO$_{2}$ with an arbitrary number of function constraints implies domain liftability for the two-variable logic with counting. So our results should also be relevant there.

%% The file named.bst is a bibliography style file for BibTeX 0.99c
\bibliographystyle{named}
\bibliography{kr}

\end{document}